\documentclass[11pt]{article}

\usepackage[preprint]{acl}

\usepackage{times}
\usepackage{latexsym}
\usepackage{booktabs}
\usepackage[table]{xcolor}
\usepackage{multirow}
\usepackage{graphicx}

\usepackage[T1]{fontenc}

\usepackage[utf8]{inputenc}

\usepackage{microtype}

\usepackage{inconsolata}

\usepackage{graphicx}
\usepackage{amsmath, amssymb, amsthm}
\usepackage{algorithm}
\usepackage{algpseudocode} 
\usepackage{booktabs}
\usepackage{subcaption}
\usepackage[most]{tcolorbox}

\newcommand{\boldhdr}[1]{
    \vspace{0.1cm}
    \noindent\textbf{#1}.
}
\newtheorem{theorem}{Theorem}[section]
\newtheorem{proposition}[theorem]{Proposition}

\usepackage{enumitem}

\newcommand{\rateinline}[2]{#1\color{gray}{\tiny$\pm$ #2}}

\title{Rethinking the Role of Temperature in Large Language Model Distillation}

\author{Hoang-Chau Luong \ \ \ \ \  Lingwei Chen \\
  Golisano College of Computing and Information Sciences \\
  Rochester Institute of Technology \\
  Rochester, NY, United States \\
  \texttt{cl6300@rit.edu, lwcics@rit.edu}}

\begin{document}
\maketitle

\begin{abstract}
    Reverse Kullback–Leibler (RKL) divergence is widely favored over forward KL (FKL) in large language models (LLM) distillation, yet this preference is largely based on comparisons that omit the temperature $\tau$, overlooking its central role in softening teacher distributions and improving knowledge transfer. In this work, we revisit temperature in LLM distillation and show that it fundamentally changes the comparison between FKL and RKL. Our analysis reveals an asymmetric effect: temperature substantially enriches FKL with non-dominant token signals, whereas it mainly rescales RKL gradients, causing FKL to benefit much more from $\tau$ scaling than RKL. This asymmetry overturns the standard empirical conclusion: although RKL outperforms FKL at $\tau=1$, FKL consistently surpasses RKL at higher temperatures across instruction-following benchmarks. Moreover, the impact of temperature is not limited to FKL; it improves a broader family of distillation objectives, enabling simple KL-based methods to achieve competitive performance against recent state-of-the-art LLM distillation approaches. 
\end{abstract}

\section{Introduction}
\label{sec:introduction}
\vspace{-0.1cm}

Knowledge distillation (KD)~\citep{hinton2015distilling} is a widely used paradigm for model compression that transfers knowledge from a large teacher model to a smaller student model~\citep{Romero15-iclr, cho2019efficacy, gou2021knowledge}. A central component of classical KD is the distillation temperature $\tau$, which softens teacher distribution and reveals informative ``dark knowledge'' beyond the top prediction~\citep{hinton2015distilling, tang2020understanding, zhao2022decoupled}. By redistributing probability mass from dominant
classes to lower-probability alternatives, KD temperature prevents the distillation signal from collapsing onto the teacher's top prediction, and provides the student with richer inter-class relational information.

Although temperature has been extensively studied in vision-based KD~\cite{beyer2022knowledge,li2023ctkd,sun2024logit,sun2025knowledge}, it is often omitted in LLM distillation. Early sequence-level KD work for language tasks~\citep{kim2016sequence} reported that $\tau=1$ achieved the best empirical performance and therefore adopted the default softmax distribution without further analyzing the role of temperature. Subsequent LLM distillation methods similarly formulate and evaluate their distillation frameworks under the default $\tau=1$ setting~\citep{gu2023minillm, agarwal2024policy, ko2024distillm, wang2025abkd, song2026survey}, implicitly removing temperature from the design space of KL-based distillation objectives.

This convention directly affects how recent work compares forward KL (FKL) and reverse KL (RKL) objectives, where the commonly reported superiority of RKL is largely established through evaluations conducted at $\tau=1$~\citep{gu2023minillm, ko2024distillm, wu2025rethinking}. Unlike RKL, whose optimization is weighted by the student distribution, FKL directly weights each sample loss by the teacher probabilities. As a result, low-entropy teacher soft targets suppress the non-target probability mass that FKL relies on, making it appear inferior not due to an inherent limitation of the objective itself, but because it is evaluated under insufficiently softened teacher distributions. This raises an important question: \emph{Is RKL intrinsically superior for LLM distillation or has FKL been systematically underestimated?}
 


\vspace{-0.1cm}
\subsection{Related Work}
\label{sec:related}
\vspace{-0.1cm}

\boldhdr{Temperature in KD}
Given teacher logits $z^t$ and student logits $z^s$, temperature-scaled distributions are defined as
\begin{equation}
\label{eq:prob_definition}
    p^\tau = \mathrm{softmax}(z^t / \tau), 
    \
    q^\tau = \mathrm{softmax}(z^s / \tau),
\end{equation}
where $\tau \geq 1$. Increasing $\tau$ smooths the distribution by reducing probability mass on dominant tokens and amplifying lower-probability alternatives. This smoothing has been widely used in vision-based distillation~\citep{hinton2015distilling, zhao2022decoupled, li2023ctkd, jin2023multi, sun2024logit, cui2024decoupled, wei2024scaled}, as well as in early NLP distillation methods such as Seq-KD~\citep{kim2016sequence}, DistilBERT~\citep{sanh2019distilbert}, and TinyBERT~\citep{jiao2020tinybert}. In LLM distillation, however, most methods omit KD temperature and evaluate objectives at $\tau=1$~\citep{gu2023minillm, ko2024distillm, wu2025rethinking, song2026survey}. Although recent work has revisited temperature for FKL~\citep{song2026survey} or proposed adaptive temperature strategies~\citep{xie2026llm, luong2026consistently}, its objective-dependent effect on FKL and RKL remains poorly understood.

\boldhdr{KL objectives for LLM distillation}
FKL and RKL are two standard distribution-matching objectives. FKL minimizes $D_{\mathrm{KL}}(p^\tau \Vert q^\tau)$, encouraging the student to cover the teacher distribution, while RKL minimizes $D_{\mathrm{KL}}(q^\tau \Vert p^\tau)$ and is associated with mode-seeking behavior. Recent LLM distillation methods favor RKL due to its strong empirical performance~\citep{gu2023minillm, ko2024distillm}, leading to variants such as AKL~\citep{wu2025rethinking}, AB-KD~\citep{wang2025abkd}, SFKL/SRKL~\citep{ko2024distillm}, and DRKL~\citep{luong2026diversity}. However, these comparisons are typically conducted under $\tau=1$ setting, leaving the interaction between temperature and KL objective design underexplored.

\vspace{-0.1cm}
\subsection{Contributions}
\label{sec:contributions}
\vspace{-0.1cm}

We revisit the role of temperature in LLM distillation and address three key questions:

\vspace{0.1cm}\noindent \textbf{RQ1: How does temperature distinguish FKL from RKL?}
We theoretically show that both FKL and RKL converge to the same logit-matching behavior in the high-temperature regime. At practical temperatures, KD temperature reshapes the teacher soft targets and enriches the non-target knowledge transfer for FKL, while primarily rescaling RKL gradients. Thus, KD temperature changes \emph{what} FKL learns, but mainly changes \emph{how} RKL learns.

\vspace{0.1cm}\noindent \textbf{RQ2: When does FKL outperform RKL?}
We demonstrate that the FKL--RKL comparison is highly temperature-dependent. While RKL often performs better at $\tau=1$, FKL benefits substantially more from softened teacher targets and consistently surpasses RKL at higher KD temperatures. This finding challenges the common view that RKL is intrinsically superior for LLM distillation.

\vspace{0.1cm}\noindent \textbf{RQ3: Does temperature broadly improve KL-based distillation objectives?}
We further show that KD temperature improves not only FKL, but also a broader family of KL-based objectives, including Sym-KL, JS, SFKL, and AKL. With appropriate KD temperature, these simple objectives achieve performance competitive with existing LLM distillation methods. This suggests that KD temperature is not merely a correction for FKL, but a general objective-dependent factor for designing and fairly comparing LLM distillation methods.

\begin{figure*}[t]
\vspace{-2mm}
    \centering
    
    \begin{subfigure}[t]{0.32\linewidth}
        \centering
        \includegraphics[width=\linewidth]{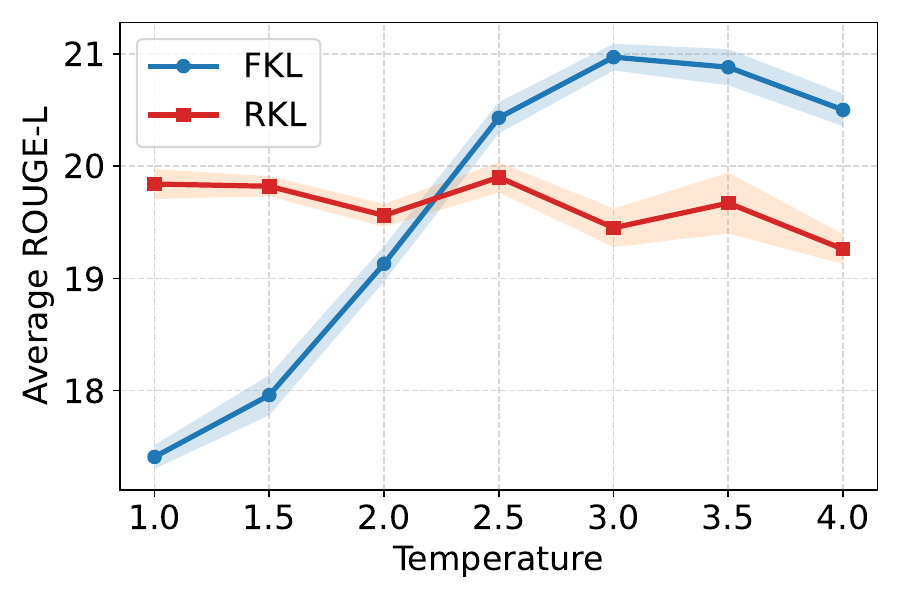}
        \vspace{-7mm}
        \caption{GPT-2 XL $\to$ GPT-2 Base}
        \label{fig:base}
    \end{subfigure}
    \hfill
    \begin{subfigure}[t]{0.32\linewidth}
        \centering
        \includegraphics[width=\linewidth]{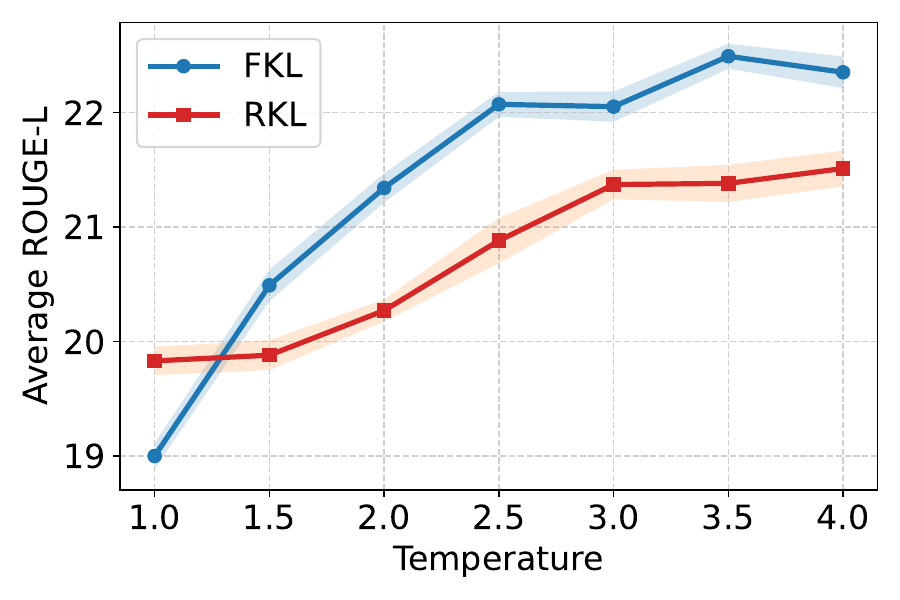}
        \vspace{-7mm}
        \caption{GPT-2 XL $\to$ GPT-2 Medium}
        \label{fig:medium}
    \end{subfigure}
    \hfill
    \begin{subfigure}[t]{0.32\linewidth}
        \centering
        \includegraphics[width=\linewidth]{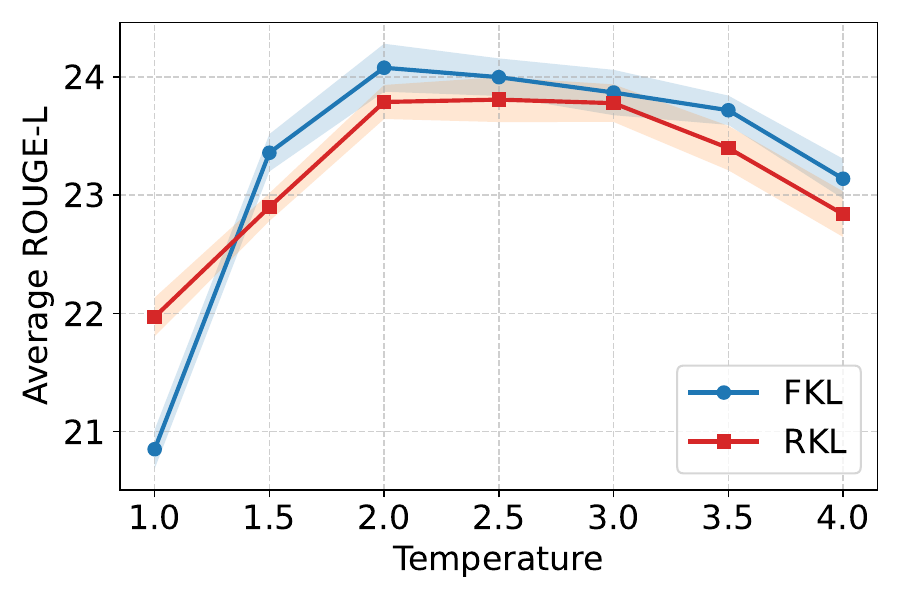}
        \vspace{-7mm}
        \caption{OPT 6.7B $\to$ OPT 1.3B}
        \label{fig:opt}
    \end{subfigure}

    \vspace{-3mm}
    \caption{Effect of temperature on FKL and RKL across model scales. Temperature consistently improves FKL while providing limited benefit for RKL, leading to a reversal in their performance at higher temperatures.
    }    
    \vspace{-6mm}
    \label{fig:combined}
\end{figure*}

\vspace{-0.1cm}
\section{How Does Temperature Distinguish FKL from RKL?}
\label{sec:rq1}
\vspace{-0.1cm}

This section theoretically analyzes the behavior of FKL and RKL under KD temperature, which shows that the two objectives become equivalent in the high-temperature regime, but diverge substantially at low temperatures, explaining why FKL can outperform RKL in LLM
distillation.

Let $z^t,z^s\in\mathbb{R}^{V}$ denote the teacher and student logits over a vocabulary of size $V$. The temperature-scaled teacher and student distributions, denoted by $p^\tau$ and $q^\tau$, are defined in Eq.~\eqref{eq:prob_definition}. We study $\mathcal{L}_{\mathrm{FKL}}=D_{\mathrm{KL}}(p^\tau\Vert q^\tau) =
    \sum_{i=1}^{V} p_i^\tau \log (p_i^\tau / q_i^\tau)$ and $\mathcal{L}_{\mathrm{RKL}}=D_{\mathrm{KL}}(q^\tau\Vert p^\tau)$.
Their gradients with respect to student logit $z_i^s$ are
\begin{align}
\label{eq:fkl_gradients}
    \nabla_{z_i^s}\mathcal{L}_{\mathrm{FKL}}
    &=
    \frac{1}{\tau}(q_i^\tau-p_i^\tau),
    \\
    \nabla_{z_i^s}\mathcal{L}_{\mathrm{RKL}}
    &=
    \frac{1}{\tau}q_i^\tau
    \left(
    \log \frac{q_i^\tau}{p_i^\tau} -\mathcal{L}_{\mathrm{RKL}}
    \right).
    \label{eq:rkl_gradients}
\end{align}

\begin{proposition}[]
\label{prop:high_temp}
Assume the teacher and student logits are centered, i.e., $\sum_i z_i^t=\sum_i z_i^s=0$. 
As $\tau\to\infty$, FKL and RKL gradients satisfy
\begin{equation*}
    \nabla_{z_i^s}\mathcal{L}_{\mathrm{FKL}}
    =
    \nabla_{z_i^s}\mathcal{L}_{\mathrm{RKL}}
    =
    \frac{1}{V\tau^2}(z_i^s-z_i^t)
    +O(\tau^{-3}).
\end{equation*}
\end{proposition}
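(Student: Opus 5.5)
The approach is a Taylor expansion of both temperature-scaled softmaxes around the uniform distribution in the small parameter $\epsilon=1/\tau$, substituted into the closed-form gradients of Eqs.~\eqref{eq:fkl_gradients} and \eqref{eq:rkl_gradients}. Fix the logits and write $p_i^\tau=e^{z_i^t/\tau}/\sum_j e^{z_j^t/\tau}$.

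\textbf{Step 1: expand the softmaxes.} Expanding numerator and denominator gives
\[
e^{z_i^t/\tau}=1+z_i^t/\tau+O(\tau^{-2}), \qquad \sum_j e^{z_j^t/\tau}=V+\tfrac{1}{\tau}\sum_j z_j^t+O(\tau^{-2}).
\]
Centering kills the first-order term of the denominator, so it equals $V+O(\tau^{-2})$. Hence
\[
p_i^\tau=\frac{1}{V}\Bigl(1+\frac{z_i^t}{\tau}\Bigr)+O(\tau^{-2}),
\]
and similarly for $q_i^\tau$. Since $V$ is finite, all constants hidden in $O(\cdot)$ depend only on the fixed logits.

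\textbf{Step 2: FKL gradient.} The expansion gives $q_i^\tau-p_i^\tau=\frac{1}{V\tau}(z_i^s-z_i^t)+O(\tau^{-2})$. Multiplying by $1/\tau$ yields the claimed leading term $\frac{1}{V\tau^2}(z_i^s-z_i^t)+O(\tau^{-3})$.

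\textbf{Step 3: RKL gradient.} Use the exact identity
\[
\log\frac{q_i^\tau}{p_i^\tau}=\frac{z_i^s-z_i^t}{\tau}-\bigl(\log Z_s(\tau)-\log Z_t(\tau)\bigr),
\]
where $Z_s,Z_t$ are the partition functions. By Step 1, $\log Z=\log V+O(\tau^{-2})$, so $\log(q_i^\tau/p_i^\tau)=\frac{1}{\tau}(z_i^s-z_i^t)+O(\tau^{-2})$. Next bound $\mathcal{L}_{\mathrm{RKL}}=\sum_i q_i^\tau\log(q_i^\tau/p_i^\tau)$. Substituting $q_i^\tau=\frac1V+O(\tau^{-1})$ gives
\[
\mathcal{L}_{\mathrm{RKL}}=\frac{1}{V\tau}\sum_i (z_i^s-z_i^t)+O(\tau^{-2})=O(\tau^{-2}),
\]
again by centering. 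The bracket in Eq.~\eqref{eq:rkl_gradients} is therefore $\frac{1}{\tau}(z_i^s-z_i^t)+O(\tau^{-2})$. Multiplying by $\frac{1}{\tau}q_i^\tau=\frac{1}{V\tau}+O(\tau^{-2})$ gives $\frac{1}{V\tau^2}(z_i^s-z_i^t)+O(\tau^{-3})$, which matches Step 2.

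\textbf{Main obstacle.} The delicate part is bookkeeping the orders in Step 3. The cancellation of the $O(\tau^{-1})$ term in $\mathcal{L}_{\mathrm{RKL}}$ and in the log-partition difference both rely on centering; without it, both acquire first-order contributions. Even without centering, the log-partition difference $\frac{1}{V\tau}\sum_j(z_j^s-z_j^t)$ is uniform in $i$ and cancels against the loss term $\mathcal{L}_{\mathrm{RKL}}$ at leading order. The leading term of the gradient would then stay unchanged, except that the logit differences appear centered. I would present the proof with centering as assumed, and also note this invariance under shifting the logits.
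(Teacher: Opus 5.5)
Your proof is correct and follows the paper's argument: the same first-order expansion of both softmaxes around the uniform distribution is substituted directly into the FKL gradient. The only difference is in the RKL part, where the paper uses the exact identity of Proposition~\ref{prop:finite_temp} and so only needs $\mathbb{E}_{j\sim q^\tau}[\Delta_j]=O(\tau^{-1})$, while you bound $\log Z_s-\log Z_t$ and $\mathcal{L}_{\mathrm{RKL}}$ separately; in that identity the normalization constant $\log(Z_t^\tau/Z_s^\tau)$ cancels exactly between $\log(q_i^\tau/p_i^\tau)$ and $\mathcal{L}_{\mathrm{RKL}}$, and your separate bounds are equally valid. This exact cancellation, together with shift-invariance of KL between softmaxes, also shows your closing remark is slightly off, though this does not affect your proof under the stated hypothesis: $\mathcal{L}_{\mathrm{RKL}}=O(\tau^{-2})$ with or without centering, so the loss never acquires a first-order term, and the constant cancels exactly rather than only at leading order.
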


\boldhdr{High-temperature regime: FKL and RKL become logit matching}
Recent studies~\citep{wu2025rethinking, luong2026diversity} suggest that FKL and RKL reach the same solution under idealized assumptions, such as sufficient student capacity and exact global optimization. Proposition~\ref{prop:high_temp} (proof is provided in Appendix~\ref{appendix:proposition1}) shows that temperature alone is sufficient to recover this equivalence: as $\tau\to\infty$, both objectives share the same gradient and reduce to teacher--student logit matching. Thus, the practical distinction between FKL and RKL must come from their low-temperature regime.

\boldhdr{Low-temperature regime: temperature separates FKL from RKL}
At practical temperatures, FKL directly matches the student distribution to the softened teacher distribution, whereas RKL still operates through student-weighted teacher--student logit gaps. For FKL, $p^\tau$ serves as the probability target, and changing $\tau$ changes the supervision itself:
$
    \frac{\partial p_i^\tau}{\partial \tau}
    =
    \frac{p_i^\tau}{\tau^2}
    (\mathbb{E}_{j\sim p^\tau}[z_j^t]-z_i^t).
$
Increasing temperature reduces the dominance of high-logit tokens and redistributes probability mass to lower-logit tokens. 
Together with Eq.~\eqref{eq:fkl_gradients}, this shows temperature changes the strength of the FKL update, and teacher distribution the student is trained to match.

\begin{proposition}[]
\label{prop:finite_temp}
Let $\Delta_i=z_i^s-z_i^t$, the RKL gradient can be rewritten as
\begin{equation}
\label{eq:rkl_gap_gradient}
    \nabla_{z_i^s}\mathcal{L}_{\mathrm{RKL}}
    =
    \frac{q_i^\tau}{\tau^2}
    \left(
    \Delta_i-\mathbb{E}_{j\sim q^\tau}[\Delta_j]
    \right),
\end{equation}
\end{proposition}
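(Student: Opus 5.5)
Starting from the RKL gradient in Eq.~\eqref{eq:rkl_gradients}, the plan is to rewrite the log-ratio $\log(q_i^\tau/p_i^\tau)$ in terms of logits and show that the normalizing constants cancel against $\mathcal{L}_{\mathrm{RKL}}$. Write the temperature-scaled softmaxes as $\log q_i^\tau = z_i^s/\tau - \log Z_s$ and $\log p_i^\tau = z_i^t/\tau - \log Z_t$, where $Z_s=\sum_j \exp(z_j^s/\tau)$ and $Z_t=\sum_j\exp(z_j^t/\tau)$. Then
\begin{equation*}
\log\frac{q_i^\tau}{p_i^\tau} = \frac{\Delta_i}{\tau} - \log\frac{Z_s}{Z_t},
\end{equation*}
so every token's log-ratio is $\Delta_i/\tau$ shifted by the same token-independent constant $c=\log(Z_s/Z_t)$.

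Next, express $\mathcal{L}_{\mathrm{RKL}}$ in the same form. By definition,
\begin{equation*}
\mathcal{L}_{\mathrm{RKL}}=\sum_j q_j^\tau\log\frac{q_j^\tau}{p_j^\tau}=\mathbb{E}_{j\sim q^\tau}\!\left[\frac{\Delta_j}{\tau}\right]-c,
\end{equation*}
using $\sum_j q_j^\tau=1$ to pull out the constant. Subtracting,
\begin{equation*}
\log\frac{q_i^\tau}{p_i^\tau}-\mathcal{L}_{\mathrm{RKL}}=\frac{1}{\tau}\left(\Delta_i-\mathbb{E}_{j\sim q^\tau}[\Delta_j]\right),
\end{equation*}
and the partition-function term $c$ cancels exactly. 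Substituting into Eq.~\eqref{eq:rkl_gradients} and multiplying by the prefactor $q_i^\tau/\tau$ yields Eq.~\eqref{eq:rkl_gap_gradient} with the claimed $1/\tau^2$ factor.

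No step is a genuine obstacle; the only point needing care is the cancellation of $c$. I would stress that it relies on $c$ being independent of $i$ and on the expectation in $\mathcal{L}_{\mathrm{RKL}}$ being taken under $q^\tau$, the same distribution as the centering term. Unlike Proposition~\ref{prop:high_temp}, no centering assumption on the logits and no asymptotics in $\tau$ are needed, so the identity is exact for every $\tau>0$. If the gradient formula in Eq.~\eqref{eq:rkl_gradients} were not taken as given, I would first rederive it from $\partial q_j^\tau/\partial z_i^s=\frac{1}{\tau}q_j^\tau(\delta_{ij}-q_i^\tau)$, using $\sum_j \partial q_j^\tau/\partial z_i^s=0$ to drop the term arising from differentiating $\log q_j^\tau$ itself.
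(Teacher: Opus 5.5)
Your proposal is correct and follows essentially the same route as the paper's proof: you decompose $\log(q_i^\tau/p_i^\tau)$ into $\Delta_i/\tau$ plus the token-independent normalization constant (your $-c$ is the paper's $C_\tau=\log(Z_t^\tau/Z_s^\tau)$), take its $q^\tau$-expectation to obtain $\mathcal{L}_{\mathrm{RKL}}$, subtract so the constant cancels, and substitute into Eq.~\eqref{eq:rkl_gradients}. Your added remarks are also accurate: the identity is exact for every $\tau>0$ without any centering assumption, and the rederivation of Eq.~\eqref{eq:rkl_gradients} via $\partial q_j^\tau/\partial z_i^s=\frac{1}{\tau}q_j^\tau(\delta_{ij}-q_i^\tau)$ is valid.
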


\vspace{-0.1cm}
\noindent For RKL, temperature plays a different role. 
Proposition~\ref{prop:finite_temp} (see Appendix~\ref{appendix:proposition2} for proof) shows that the RKL gradient is driven by teacher--student logit gaps rather than direct probability matching between student and softened teacher distributions as in FKL. Thus, temperature mainly conditions RKL optimization by rescaling the gradient by $1/\tau^2$ and smoothing the student-side weights $q_i^\tau$.


\begin{tcolorbox}[colback=blue!4,colframe=blue!40,left=1mm,right=1mm,top=1mm,bottom=1mm, before skip=4pt, after skip=3pt]
\small
\textbf{Insight.}
Temperature changes \emph{what} FKL learns, but mainly \emph{how} RKL learns. 
At low-regime temperatures, FKL receives richer non-target supervision from softened teacher distribution, while RKL gets gradient rescaling.
\end{tcolorbox}

\vspace{-0.1cm}
\section{When Does FKL Outperform RKL?}
\label{sec:rq2}
\vspace{-0.1cm}

Our analysis leads to a natural question: \emph{Can temperature change which KL objective is better?} We answer this question empirically on five instruction-following benchmarks including Dolly Eval, Self-Instruct~\citep{wang2023self}, Vicuna Eval~\citep{vicuna2023}, Super-Natural Instructions (Super-NI)~\citep{wang2022super}, and Unnatural Instructions (UnNI)~\citep{honovich2023unnatural} using GPT-2~\citep{radford2019language} and OPT models~\citep{zhang2022opt}, with implementation details provided in Appendix~\ref{app:implementation}. Figure~\ref{fig:combined} presents the comparative average ROUGE-L across five datasets, and reveals a clear reversal. Under $\tau=1$, RKL outperforms FKL, consistent with prior LLM distillation results. However, once temperature is introduced, FKL consistently surpasses RKL, showing that its apparent weakness at $\tau=1$ stems from insufficient access to softened non-target teacher signals. In particular, we draw the following insights.

\boldhdr{Temperature turns FKL from weak to outperforming RKL}
Increasing temperature softens the distribution, exposes richer teacher information, and yields large gains. For GPT-2 Base, FKL improves from $19.41$ to $20.97$ average ROUGE-L at $\tau=3.0$. Similar gains appear for larger students: GPT-2 Medium rises from around $19$ to above $22$, and OPT-1.3B from below $21$ to above $24$, with best results around $\tau=2.5$--$3.5$. Across model scales, these gains reverse the ranking: FKL shifts from trailing RKL at $\tau=1$ to outperforming it at proper temperatures. 
For example, FKL exceeds RKL by $1.5$ points at $\tau=3.0$ on GPT-2 Base, by over $1.0$ point at $\tau=3.5$ on GPT-2 Medium, and by $0.27$ points at $\tau=2.0$ on OPT 1.3B.

\boldhdr{RKL benefits less from temperature}
For GPT-2 Base, RKL remains nearly flat across temperatures and can even degrade at larger $\tau$. For larger students, the gains are modest, about $1$ point on GPT-2 Medium and $2$ points on OPT-1.3B, compared with consistent gains above $3$ points for FKL. This agrees with our theoretical analysis: temperature mainly smooths and reconditions the RKL gradient without giving RKL direct access to the softened teacher distribution as a probability target.


\begin{tcolorbox}[colback=blue!4,colframe=blue!40,left=1mm,right=1mm,top=1mm,bottom=1mm, before skip=4pt, after skip=0pt]
\small
\textbf{Insight.}
    FKL outperforms RKL when temperature exposes useful non-target teacher information. 
    At $\tau=1$, FKL underuses the teacher because the target is too sharp. 
    At higher temperature, the softened distribution becomes sufficiently informative for FKL to surpass RKL.
\end{tcolorbox}

\vspace{-0.1cm}
\section{Does Temperature Broadly Improve KL-based Distillation Objectives?}
\label{sec:rq3}
\vspace{-0.1cm}

We now ask whether temperature is broadly beneficial for KL-based distillation objectives.
To answer this, we evaluate FKL, RKL, and Sym-KL, together with recent objectives such as JS~\citep{agarwal2024policy}, SFKL/SRKL~\citep{ko2024distillm}, AKL~\citep{wu2025rethinking}, AB~\citep{wang2025abkd}, and DRKL~\citep{luong2026diversity}. 
Experiments are conducted on three instruction-following benchmarks including Dolly, Super-NI, and UnNI, with two additional benchmarks reported in Appendix~\ref{app:more_exp} and implementation detailed in Appendix~\ref{app:implementation}.

\begin{figure}[t]
    \centering
    \includegraphics[width=\linewidth]{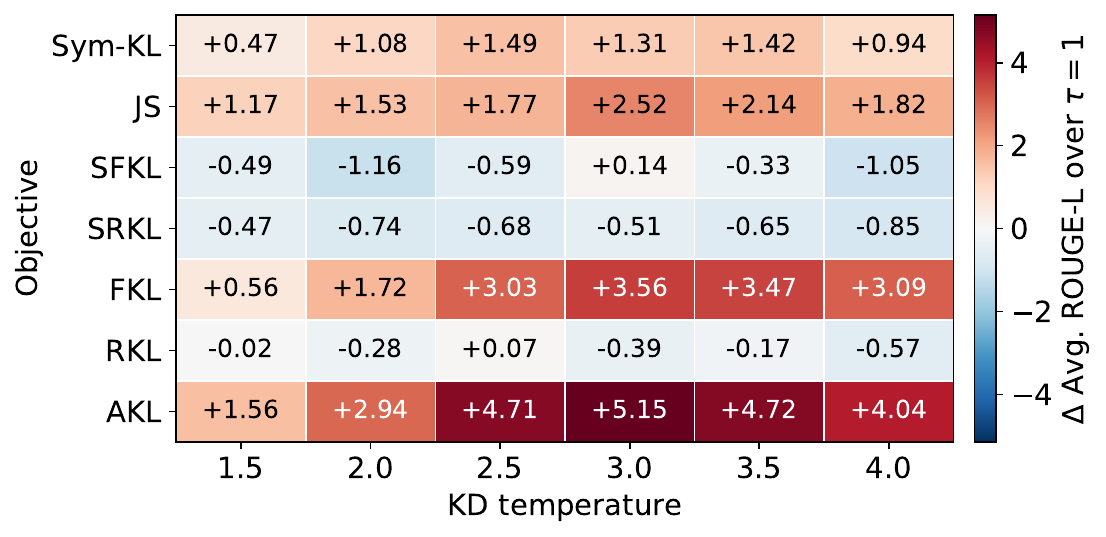}
    \vspace{-9mm}
    \caption{
    Temperature-sensitivity heatmap on GPT-2 XL $\rightarrow$ GPT-2 Base.
    }
    \label{fig:temp_gain_heatmap}
    \vspace{-7mm}
\end{figure}

\boldhdr{Temperature broadly improves KD objectives}
Table~\ref{tab:main1} compares each objective at default $\tau=1$ with its temperature-tuned variant. Temperature improves all objectives across GPT-2 and OPT models, although the gains vary substantially across objectives. For GPT-2 Base, FKL improves from $19.70$ to $25.53$ average ROUGE-L ($+5.83$), while AKL achieves the largest gain ($+6.62$). JS, SFKL, and Sym-KL also improve notably, whereas RKL and SRKL gain only $+0.49$ and $+0.79$, supporting our analysis that temperature is especially beneficial for FKL-objectives that directly exploit softened teacher probabilities. For OPT-1.3B, the gains become broadly substantial, ranging from $+3.46$ for RKL to $+4.88$ for AKL. Figure~\ref{fig:temp_gain_heatmap} visualizes ROUGE-L gains across temperatures and objectives. FKL-based objectives (FKL, Sym-KL, JS, and AKL) exhibit consistent gains, while RKL-based objectives (RKL, SRKL, and SFKL) are less stable and can even degrade. Thus, temperature is more robustly beneficial for FKL-based objectives but requires careful tuning for RKL-based ones.

\boldhdr{Temperature changes objective rankings}
Applying KD temperature substantially strengthens standard KL-based objectives and changes their relative ranking against stronger baselines. On GPT-2 Base, JS+$\tau$ achieves the best average score ($25.63$), while FKL+$\tau$ reaches $25.53$, surpassing state-of-the-art AB ($23.91$) and DRKL ($25.16$). On OPT-1.3B, FKL+$\tau$ improves to $29.08$, outperforming both AB ($25.96$) and DRKL ($27.71$), while AKL+$\tau$ achieves the best overall score ($29.62$). Thus, temperature is not a minor implementation detail: it can change which objective appears strongest and is necessary for fair comparison.

\definecolor{taurow}{RGB}{245,248,255}
\newcommand{\posgain}[1]{\textcolor{blue}{$\uparrow$#1}}
\newcommand{\secbest}[1]{\underline{#1}}

\begin{table}[t]
    \centering
    \scriptsize
    \setlength{\tabcolsep}{3.0pt}
    \renewcommand{\arraystretch}{0.98}
    \caption{
    Effect of temperature scaling on distillation losses.
    We report ROUGE-L mean and standard deviation over five random seeds. Avg. is computed over Dolly, Super-NI, and UnNI.
    }
    \vspace{-2mm}
    \resizebox{1.0\linewidth}{!}{
    \begin{tabular}{lccccc}
        \toprule
        \textbf{Method}
        & \textbf{Dolly}
        & \textbf{Super-NI}
        & \textbf{UnNI}
        & \textbf{Avg.}
        & \textbf{$\Delta$ Avg.} \\
        \midrule

        \multicolumn{6}{l}{\textbf{\textit{GPT-2 XL (1.5B) $\to$ GPT-2 Base (0.1B)}}} \\
        \midrule
        GPT-2 XL
        & \rateinline{27.00}{0.19}
        & \rateinline{26.46}{0.41}
        & \rateinline{31.10}{0.06}
        & 28.19
        & -- \\
        \midrule

        FKL
        & \rateinline{23.80}{0.55}
        & \rateinline{16.27}{0.24}
        & \rateinline{19.03}{0.09}
        & 19.70
        & -- \\

        \rowcolor{taurow}
        FKL + $\tau$
        & \rateinline{\textbf{25.83}}{0.24}
        & \rateinline{23.78}{0.26}
        & \rateinline{\secbest{26.97}}{0.12}
        & \secbest{25.53}
        & \secbest{\posgain{5.83}} \\

        RKL
        & \rateinline{24.67}{0.13}
        & \rateinline{21.03}{0.21}
        & \rateinline{23.94}{0.12}
        & 23.21
        & -- \\

        \rowcolor{taurow}
        RKL + $\tau$
        & \rateinline{25.00}{0.24}
        & \rateinline{21.41}{0.34}
        & \rateinline{24.68}{0.07}
        & 23.70
        & \posgain{0.49} \\

        Sym-KL
        & \rateinline{24.38}{0.14}
        & \rateinline{19.92}{0.25}
        & \rateinline{22.21}{0.03}
        & 22.17
        & -- \\

        \rowcolor{taurow}
        Sym-KL + $\tau$
        & \rateinline{\secbest{25.54}}{0.14}
        & \rateinline{21.69}{0.10}
        & \rateinline{25.29}{0.14}
        & 24.17
        & \posgain{2.00} \\

        JS
        & \rateinline{24.08}{0.26}
        & \rateinline{19.66}{0.23}
        & \rateinline{22.17}{0.21}
        & 21.97
        & -- \\

        \rowcolor{taurow}
        JS + $\tau$
        & \rateinline{25.51}{0.40}
        & \rateinline{\secbest{24.34}}{0.16}
        & \rateinline{27.03}{0.11}
        & \textbf{25.63}
        & \posgain{3.66} \\

        SFKL
        & \rateinline{24.44}{0.42}
        & \rateinline{22.01}{0.23}
        & \rateinline{23.47}{0.12}
        & 23.31
        & -- \\

        \rowcolor{taurow}
        SFKL + $\tau$
        & \rateinline{25.33}{0.28}
        & \rateinline{\textbf{24.40}}{0.27}
        & \rateinline{\textbf{27.07}}{0.16}
        & \secbest{25.60}
        & \posgain{2.29} \\

        SRKL
        & \rateinline{24.48}{0.37}
        & \rateinline{23.25}{0.36}
        & \rateinline{24.01}{0.05}
        & 23.91
        & -- \\

        \rowcolor{taurow}
        SRKL + $\tau$
        & \rateinline{25.03}{0.42}
        & \rateinline{23.95}{0.14}
        & \rateinline{25.13}{0.08}
        & 24.70
        & \posgain{0.79} \\

        AKL
        & \rateinline{21.83}{0.19}
        & \rateinline{15.40}{0.20}
        & \rateinline{18.06}{0.13}
        & 18.43
        & -- \\

        \rowcolor{taurow}
        AKL + $\tau$
        & \rateinline{25.35}{0.27}
        & \rateinline{23.63}{0.16}
        & \rateinline{26.18}{0.16}
        & 25.05
        & \textbf{\posgain{6.62}} \\

        \midrule
        AB
        & \rateinline{24.32}{0.29}
        & \rateinline{23.08}{0.19}
        & \rateinline{24.32}{0.14}
        & 23.91
        & -- \\

        DRKL
        & \rateinline{25.51}{0.42}
        & \rateinline{23.39}{0.11}
        & \rateinline{26.57}{0.11}
        & 25.16
        & -- \\

        \midrule
        \midrule

        \multicolumn{6}{l}{\textbf{\textit{OPT 6.7B $\to$ OPT 1.3B}}} \\
        \midrule
        OPT 6.7B
        & \rateinline{27.52}{0.29}
        & \rateinline{30.41}{0.46}
        & \rateinline{31.39}{0.20}
        & 29.77
        & -- \\
        \midrule

        FKL
        & \rateinline{26.07}{0.65}
        & \rateinline{22.11}{0.38}
        & \rateinline{26.82}{0.10}
        & 25.00
        & -- \\

        \rowcolor{taurow}
        FKL + $\tau$
        & \rateinline{\secbest{28.20}}{0.48}
        & \rateinline{27.62}{0.45}
        & \rateinline{31.43}{0.17}
        & 29.08
        & \posgain{4.08} \\

        RKL
        & \rateinline{26.58}{0.11}
        & \rateinline{22.64}{0.16}
        & \rateinline{26.29}{0.10}
        & 25.17
        & -- \\

        \rowcolor{taurow}
        RKL + $\tau$
        & \rateinline{27.64}{0.48}
        & \rateinline{26.96}{0.27}
        & \rateinline{31.29}{0.13}
        & 28.63
        & \posgain{3.46} \\

        Sym-KL
        & \rateinline{25.73}{0.40}
        & \rateinline{23.40}{0.22}
        & \rateinline{25.44}{0.07}
        & 24.86
        & -- \\

        \rowcolor{taurow}
        Sym-KL + $\tau$
        & \rateinline{27.74}{0.40}
        & \rateinline{\secbest{28.21}}{0.16}
        & \rateinline{31.73}{0.15}
        & 29.23
        & \posgain{4.37} \\

        JS
        & \rateinline{26.39}{0.71}
        & \rateinline{24.02}{0.31}
        & \rateinline{26.59}{0.11}
        & 25.67
        & -- \\

        \rowcolor{taurow}
        JS + $\tau$
        & \rateinline{27.97}{0.39}
        & \rateinline{\textbf{28.27}}{0.26}
        & \rateinline{\secbest{32.20}}{0.09}
        & \secbest{29.48}
        & \posgain{3.81} \\

        SFKL
        & \rateinline{25.98}{0.44}
        & \rateinline{23.62}{0.19}
        & \rateinline{26.15}{0.11}
        & 25.25
        & -- \\

        \rowcolor{taurow}
        SFKL + $\tau$
        & \rateinline{27.52}{0.64}
        & \rateinline{27.39}{0.20}
        & \rateinline{32.03}{0.13}
        & 28.98
        & \posgain{3.73} \\

        SRKL
        & \rateinline{26.04}{0.36}
        & \rateinline{22.96}{0.22}
        & \rateinline{24.71}{0.22}
        & 24.57
        & -- \\

        \rowcolor{taurow}
        SRKL + $\tau$
        & \rateinline{27.83}{0.38}
        & \rateinline{28.20}{0.37}
        & \rateinline{31.68}{0.14}
        & 29.24
        & \secbest{\posgain{4.67}} \\

        AKL
        & \rateinline{26.20}{0.27}
        & \rateinline{22.12}{0.18}
        & \rateinline{25.89}{0.20}
        & 24.74
        & -- \\

        \rowcolor{taurow}
        AKL + $\tau$
        & \rateinline{\textbf{28.35}}{0.19}
        & \rateinline{28.10}{0.23}
        & \rateinline{\textbf{32.40}}{0.15}
        & \textbf{29.62}
        & \textbf{\posgain{4.88}} \\

        \midrule
        AB
        & \rateinline{26.86}{0.17}
        & \rateinline{24.39}{0.11}
        & \rateinline{26.62}{0.09}
        & 25.96
        & -- \\

        DRKL
        & \rateinline{27.72}{0.36}
        & \rateinline{25.56}{0.36}
        & \rateinline{29.84}{0.15}
        & 27.71
        & -- \\

        \bottomrule
    \end{tabular}
    }
    \vspace{-4mm}
    \label{tab:main1}
\end{table}

\boldhdr{Case study}
Appendix~\ref{app:case_studies} provides case studies on the Unnatural Instructions test set, comparing model outputs under different objectives.


\begin{tcolorbox}[colback=blue!4,colframe=blue!40,left=1mm,right=1mm,top=1mm,bottom=1mm, before skip=4pt, after skip=0pt]
\small
\textbf{Insight.}
Temperature broadly improves KL-based distillation. 
Its gains are largest for FKL-based objectives, but weaker for RKL-based objectives. Thus, KD temperature should be considered essential for fair comparison, rather than a minor implementation detail.
\end{tcolorbox}

\section{Conclusion}

In this work, we revisit the role of KD temperature in LLM distillation and show that it can change the relative behavior of FKL and RKL. 
KD temperature makes non-target teacher probabilities more informative, allowing FKL to transfer richer knowledge and surpass RKL despite underperforming at $\tau=1$.
Beyond that, KD temperature also improves a broad range of KL-based distillation objectives, showing that it is essential for fair comparison.

\clearpage
\section*{Limitations}

Due to computational constraints, our language experiments use teacher models with up to 7B parameters, and evaluating larger teachers remains future work. 
We also exclude on-policy sampling, stronger student initialization, auxiliary supervision, and task-specific enhancements, although these techniques may further improve performance. 
This design ensures a controlled comparison across distillation objectives, allowing us to isolate the effect of the objective itself.


\bibliography{custom}

\newpage

\appendix

This appendix provides additional implementation details and supporting theoretical and empirical materials for the main paper:

\begin{itemize}[leftmargin=*,itemsep=1pt,topsep=2pt]
    \item Appendix~\ref{app:implementation} presents implementation, training, and evaluation details, including datasets, teacher--student settings, compared objectives, and temperature tuning protocols.
    
    \item Appendix~\ref{appendix:proposition1} provides proof for Proposition~\ref{prop:high_temp}.

    \item Appendix~\ref{appendix:proposition2} provides proof for Proposition~\ref{prop:finite_temp}.
    
    \item Appendix~\ref{app:more_exp} reports additional experimental results on Self-Instruct and Vicuna benchmarks across objectives.
    
    \item Appendix~\ref{app:case_studies} presents qualitative case studies illustrating how KD temperature improves instruction following and output quality under different distillation objectives.

    \item Appendix~\ref{appx:llms} describes the limited use of AI assistants for language refinement during manuscript preparation.
\end{itemize}

\section{Implementation Details}
\label{app:implementation}

\boldhdr{Training resources}
All experiments were conducted on a GPU cluster with four NVIDIA A100 GPUs, each with 40GB memory.

\boldhdr{Experimental setup}
We evaluate LLM distillation in an off-policy instruction-following setting, where student models are trained on fixed teacher-generated responses. 
Following~\citet{gu2023minillm}, we use the instruction--response dataset constructed from \texttt{databricks-dolly-15k}~\citep{DatabricksBlog2023DollyV2}, containing 14k training examples, 500 validation examples, and 500 test examples. 
We first fine-tune the teacher models on this dataset and then distill their responses into smaller students. 
Our teacher--student pairs include GPT-2 XL~\citep{radford2019language} (1.5B) $\rightarrow$ GPT-2 Base (120M) and OPT 6.7B~\citep{zhang2022opt} $\rightarrow$ OPT 1.3B.

\boldhdr{Training protocol}
To isolate the effect of the distillation objective and KD temperature, all methods are trained under the same protocol. 
For both GPT-2 and OPT families, we use a batch size of 32, train for 20 epochs, and set the maximum input length to 512 tokens. 
Following~\citet{gu2023minillm}, the learning rate is set to $5\times10^{-4}$ for GPT-2 Base and $5\times10^{-5}$ for OPT-1.3B. 
All compared methods use the same training data, teacher checkpoint, student initialization, and hyperparameter budget, ensuring that performance differences are attributable to the distillation objective and temperature design rather than implementation or training discrepancies.

\boldhdr{Evaluation protocol}
We evaluate the distilled students on five instruction-following benchmarks, including Dolly Eval, Self-Instruct~\citep{wang2023self}, Vicuna Eval~\citep{vicuna2023}, Super-Natural Instructions (Super-NI)~\citep{wang2022super}, and Unnatural Instructions (UnNI)~\citep{honovich2023unnatural}. 
We report ROUGE-L~\citep{lin2004rouge} averaged over five random seeds $\{10,20,30,40,50\}$. 
Model checkpoints are saved after each epoch, and final results are reported using the checkpoint with the best validation ROUGE-L. 
The decoding temperature is set to 1 during evaluation.

\boldhdr{Compared objectives}
We compare KL-family and recent LLM distillation objectives under the same off-policy setting. 
The KL-family baselines include FKL, RKL~\citep{gu2023minillm}, symmetric KL (Sym-KL), Jensen--Shannon divergence (JS)~\citep{agarwal2024policy}, skewed FKL (SFKL)~\citep{ko2024distillm}, skewed RKL (SRKL)~\citep{ko2024distillm}. 
For Sym-KL, we use $0.5\,\mathrm{FKL}+0.5\,\mathrm{RKL}$ and for SFKL and SRKL, we set the smoothing parameter to $\alpha=0.1$. 
We also compare with stronger recent objectives, including Adaptive KL (AKL)~\citep{wu2025rethinking}, $\alpha$-$\beta$ divergence (AB)~\citep{wang2025abkd} with $(\alpha=0.2,\beta=0.7)$, and diversity-aware RKL (DRKL)~\citep{luong2026diversity}. 
Since our focus is off-policy distillation from fixed teacher responses, we exclude methods that require on-policy sampling or additional external datasets. 
For all baselines, we follow the hyperparameter settings from the original papers and use the implementation provided by DistilLM~\citep{ko2024distillm}.

For the temperature-tuned variants reported in Tables~\ref{tab:main1} and~\ref{tab:extra}, including FKL+$\tau$, RKL+$\tau$, Sym-KL+$\tau$, JS+$\tau$, SFKL+$\tau$, SRKL+$\tau$, and AKL+$\tau$, we apply KD temperature on top of each original distillation objective while keeping its implementation and default hyperparameters unchanged. 
The only tuned factor is the KD temperature, for which we perform a grid search over $\tau \in \{1.5, 2.0, 2.5, 3.0, 3.5, 4.0\}$ and report the best-performing setting for comparison. 
Therefore, the gains reported in Tables~\ref{tab:main1} and~\ref{tab:extra} are attributable solely to KD temperature, not to additional objective-specific hyperparameter tuning. 
Further tuning of each objective's own hyperparameters could potentially yield additional improvements.

\section{Proof for Proposition~\ref{prop:high_temp}}\label{appendix:proposition1}

\begin{proof}
For large $\tau$, we use the first-order expansion
$\exp(z_i/\tau)=1+z_i/\tau+O(\tau^{-2})$. 
For the teacher distribution, this gives
\begin{align*}
    \sum_j \exp(z_j^t/\tau)
    &=
    V+\frac{1}{\tau}\sum_j z_j^t+O(\tau^{-2}) \\
    &=
    V+O(\tau^{-2}),
\end{align*}
where the last equality follows from the centered-logit assumption 
$\sum_j z_j^t=0$. 
Thus,
\begin{equation*}
    p_i^\tau
    =
    \frac{1+z_i^t/\tau+O(\tau^{-2})}
         {V+O(\tau^{-2})}
    =
    \frac{1}{V}
    +
    \frac{z_i^t}{V\tau}
    +
    O(\tau^{-2}).
\end{equation*}
Applying the same argument to the student logits gives
\[
    q_i^\tau
    =
    \frac{1}{V}
    +
    \frac{z_i^s}{V\tau}
    +
    O(\tau^{-2}).
\]
\noindent \textbf{For FKL part.} Substituting these into the FKL gradient in Eq.~\eqref{eq:fkl_gradients} gives
$$
\nabla_{z_i^s}\mathcal{L}_{\mathrm{FKL}}
=
\frac{1}{V\tau^2}(z_i^s-z_i^t)+O(\tau^{-3}).
$$

\noindent \textbf{For RKL part.} Since $q_j^\tau=1/V+O(\tau^{-1})$ and the centered-logit assumption gives 
$\sum_j\Delta_j=0$, we have
\begin{align*}
\mathbb{E}_{j\sim q^\tau}[\Delta_j]
&=
\sum_j q_j^\tau\Delta_j \\
&=
\frac{1}{V}\sum_j\Delta_j
+
\sum_j O(\tau^{-1})\Delta_j \\
&=
O(\tau^{-1}).
\end{align*}
Using this, Proposition~\ref{prop:finite_temp}, and $q_i^\tau=1/V+O(\tau^{-1})$ as $\tau \to \infty$, we have
$$
\nabla_{z_i^s}\mathcal{L}_{\mathrm{RKL}}
=
\frac{1}{V\tau^2}(z_i^s-z_i^t)+O(\tau^{-3}).
$$
This completes the proof.
\end{proof}

\section{Proof for Proposition~\ref{prop:finite_temp}}\label{appendix:proposition2}

\begin{proof} 
The log-ratio can be decomposed as
\begin{equation}
\label{eq:rkl_log_ratio}
    \log\frac{q_i^\tau}{p_i^\tau}
    =
    \frac{\Delta_i}{\tau}
    +
    C_\tau,
    \text{where }
    C_\tau=\log\frac{Z_t^\tau}{Z_s^\tau},
\end{equation}
$Z_t^\tau=\sum_j\exp(z_j^t/\tau)$ and $Z_s^\tau=\sum_j\exp(z_j^s/\tau)$. The term $C_\tau$ comes only from the softmax normalization constants and is independent of token index $i$.
From the definition of $\mathcal{L}_{\mathrm{RKL}}$, we have
\[
    \mathcal{L}_{\mathrm{RKL}}
    =
    \mathbb{E}_{j\sim q^\tau}
    \left[
    \log\frac{q_j^\tau}{p_j^\tau}
    \right]
    =
    \frac{1}{\tau}\mathbb{E}_{j\sim q^\tau}[\Delta_j]
    +
    C_\tau .
\] 
Taking Eq.~\eqref{eq:rkl_log_ratio} minus this $\mathcal{L}_{\mathrm{RKL}}$, we obtain
\[
    \log\frac{q_i^\tau}{p_i^\tau}
    -
    \mathcal{L}_{\mathrm{RKL}}
    =
    \frac{1}{\tau}
    \left(
    \Delta_i-\mathbb{E}_{j\sim q^\tau}[\Delta_j]
    \right).
\]
Substituting this into Eq.~\eqref{eq:rkl_gradients} gives
\[
    \nabla_{z_i^s}\mathcal{L}_{\mathrm{RKL}}
    =
    \frac{q_i^\tau}{\tau^2}
    \left(
    \Delta_i-\mathbb{E}_{j\sim q^\tau}[\Delta_j]
    \right).
\]
This completes the proof.
\end{proof}

\definecolor{taurow}{RGB}{245,248,255}

\begin{table}[t]
    \centering
    \scriptsize
    \setlength{\tabcolsep}{3.0pt}
    \renewcommand{\arraystretch}{0.98}
    \caption{
    Effect of temperature scaling on distillation losses.
    We report ROUGE-L mean and standard deviation over five random seeds.
    Avg. is computed over Self-Inst. and Vicuna.
    }
    \resizebox{1.0\linewidth}{!}{
    \begin{tabular}{lcccc}
        \toprule
        \textbf{Method}
        & \textbf{Self-Instruct}
        & \textbf{Vicuna}
        & \textbf{Avg.} $(\uparrow)$
        & \textbf{$\Delta$ Avg.} \\
        \midrule

        \multicolumn{5}{l}{\textbf{\textit{GPT-2 XL (1.5B) $\to$ GPT-2 Base (0.1B)}}} \\
        \midrule
        GPT-2 XL
        & \rateinline{14.07}{0.37}
        & \rateinline{16.31}{0.32}
        & 15.19
        & -- \\
        \midrule

        FKL
        & \rateinline{9.68}{0.49}
        & \rateinline{14.53}{0.31}
        & 12.11
        & -- \\

        \rowcolor{taurow}
        FKL + $\tau$
        & \rateinline{12.16}{0.42}
        & \rateinline{\textbf{16.71}}{0.17}
        & \secbest{14.44}
        & \secbest{\posgain{2.33}} \\

        RKL
        & \rateinline{11.25}{0.35}
        & \rateinline{15.82}{0.43}
        & 13.54
        & -- \\

        \rowcolor{taurow}
        RKL + $\tau$
        & \rateinline{11.97}{0.36}
        & \rateinline{16.46}{0.38}
        & 14.22
        & \posgain{0.68} \\

        Sym-KL
        & \rateinline{11.07}{0.43}
        & \rateinline{15.72}{0.75}
        & 13.40
        & -- \\

        \rowcolor{taurow}
        Sym-KL + $\tau$
        & \rateinline{11.88}{0.24}
        & \rateinline{16.35}{0.37}
        & 14.12
        & \posgain{0.72} \\

        JS
        & \rateinline{11.67}{0.33}
        & \rateinline{15.47}{0.37}
        & 13.57
        & -- \\

        \rowcolor{taurow}
        JS + $\tau$
        & \rateinline{\textbf{12.71}}{0.37}
        & \rateinline{16.05}{0.29}
        & 14.38
        & \posgain{0.81} \\

        SFKL
        & \rateinline{11.57}{0.15}
        & \rateinline{15.35}{0.30}
        & 13.46
        & -- \\

        \rowcolor{taurow}
        SFKL + $\tau$
        & \rateinline{11.77}{0.26}
        & \rateinline{\secbest{16.68}}{0.59}
        & 14.23
        & \posgain{0.77} \\

        SRKL
        & \rateinline{10.69}{0.21}
        & \rateinline{14.96}{0.34}
        & 12.83
        & -- \\

        \rowcolor{taurow}
        SRKL + $\tau$
        & \rateinline{11.17}{0.40}
        & \rateinline{15.42}{0.22}
        & 13.30
        & \posgain{0.47} \\

        AKL
        & \rateinline{9.57}{0.24}
        & \rateinline{13.70}{0.21}
        & 11.64
        & -- \\

        \rowcolor{taurow}
        AKL + $\tau$
        & \rateinline{\secbest{12.64}}{0.13}
        & \rateinline{16.52}{0.34}
        & \textbf{14.58}
        & \textbf{\posgain{2.94}} \\

        \midrule
        AB
        & \rateinline{11.05}{0.21}
        & \rateinline{15.82}{0.58}
        & 13.44
        & -- \\

        DRKL
        & \rateinline{12.34}{0.58}
        & \rateinline{16.33}{0.31}
        & 14.34
        & -- \\

        \midrule
        \midrule

        \multicolumn{5}{l}{\textbf{\textit{OPT 6.7B $\to$ OPT 1.3B}}} \\
        \midrule
        OPT 6.7B
        & \rateinline{16.42}{0.69}
        & \rateinline{17.64}{0.27}
        & 17.03
        & -- \\
        \midrule

        FKL
        & \rateinline{12.84}{0.32}
        & \rateinline{16.71}{0.33}
        & 14.78
        & -- \\

        \rowcolor{taurow}
        FKL + $\tau$
        & \rateinline{15.21}{0.49}
        & \rateinline{17.95}{0.57}
        & 16.58
        & \secbest{\posgain{1.80}} \\

        RKL
        & \rateinline{12.29}{0.74}
        & \rateinline{17.54}{0.13}
        & 14.92
        & -- \\

        \rowcolor{taurow}
        RKL + $\tau$
        & \rateinline{15.03}{0.40}
        & \rateinline{\secbest{18.11}}{0.65}
        & 16.57
        & \posgain{1.66} \\

        Sym-KL
        & \rateinline{13.56}{0.68}
        & \rateinline{16.95}{0.37}
        & 15.26
        & -- \\

        \rowcolor{taurow}
        Sym-KL + $\tau$
        & \rateinline{14.94}{1.01}
        & \rateinline{\textbf{18.14}}{0.69}
        & 16.54
        & \posgain{1.29} \\

        JS
        & \rateinline{12.63}{0.51}
        & \rateinline{17.15}{0.39}
        & 14.89
        & -- \\

        \rowcolor{taurow}
        JS + $\tau$
        & \rateinline{14.93}{0.73}
        & \rateinline{18.05}{0.37}
        & 16.49
        & \posgain{1.60} \\

        SFKL
        & \rateinline{11.95}{0.84}
        & \rateinline{16.87}{0.45}
        & 14.41
        & -- \\

        \rowcolor{taurow}
        SFKL + $\tau$
        & \rateinline{14.50}{0.41}
        & \rateinline{17.87}{0.46}
        & 16.19
        & \posgain{1.78} \\

        SRKL
        & \rateinline{11.99}{0.28}
        & \rateinline{17.34}{0.25}
        & 14.67
        & -- \\

        \rowcolor{taurow}
        SRKL + $\tau$
        & \rateinline{\textbf{15.41}}{0.45}
        & \rateinline{18.04}{0.49}
        & \textbf{16.73}
        & \textbf{\posgain{2.06}} \\

        AKL
        & \rateinline{13.11}{0.31}
        & \rateinline{17.26}{0.66}
        & 15.19
        & -- \\

        \rowcolor{taurow}
        AKL + $\tau$
        & \rateinline{\secbest{15.27}}{0.79}
        & \rateinline{18.10}{0.24}
        & \secbest{16.69}
        & \posgain{1.50} \\

        \midrule
        AB
        & \rateinline{12.93}{0.47}
        & \rateinline{17.59}{0.41}
        & 15.26
        & -- \\

        DRKL
        & \rateinline{14.37}{0.31}
        & \rateinline{18.04}{0.24}
        & 16.21
        & -- \\

        \bottomrule
    \end{tabular}
    }
    \vspace{-4mm}
    \label{tab:extra}
\end{table}

\begin{table*}[ht]
\centering
\caption{Qualitative examples from the Unnatural Instructions~\citep{honovich2023unnatural} benchmark. 
The OPT-1.3B student is distilled from the OPT-6.7B teacher. 
With KD temperature, all distillation objectives generate responses that better satisfy the given instructions compared with their $\tau=1$ counterparts.}
\vspace{5pt}
\resizebox{\textwidth}{!}{
\addtolength{\tabcolsep}{2.5pt}
\begin{tabular}{m{3cm}m{20cm}}
\toprule[0.1em]
        \multicolumn{2}{c}{\textbf{\textit{Case study \# 1}}} \\ \midrule[0.1em]
        Instruction & Given a paragraph, write down all the proper nouns in the order they occur. A proper noun is a specific name for a person, place, or thing and is always capitalized. \\ \midrule
        Input & Passage: Toby thought that Mr. Krebs was being unfair when he gave out detentions for talking during class, but Mrs. Dvorak said that it was part of his job as a teacher.\\ \midrule
        \rowcolor{green!10} Ground-truth & Toby, Mr. Krebs, Mrs. Dvorak \\
        \midrule
        \midrule
        FKL    & Toby, Ms. Dvorak, and Mr. Krebs are all proper nouns. \\
        \midrule
        \rowcolor{blue!10} FKL + $\tau$   & Toby, Mr. Krebs, Mrs. Dvorak \\
        \midrule
        RKL   & Toby, Mrs. Dvorak, Mr. Krebs, Class \\
        \midrule
        \rowcolor{blue!10} RKL + $\tau$   & Toby, Mr. Krebs, Mrs. Dvorak \\
        \midrule
        Sym-KL    & Toby, Mr. Krebs, Mrs. Dvorak \\
        \midrule
        \rowcolor{blue!10} Sym-KL + $\tau$   & Toby, Mr. Krebs, Mrs. Dvorak \\
        \midrule
        JS  & Toby, Mrs. Dvorak, and Mr. Krebs Waited for the perfect opportunity to strike back at Mr. Krabs. When he was delivering the class talk, he stopped cold and asked them to exchange their places. Toby assumed that Mr. Krabs was being unfair because he was giving out detentions for speaking during class. Mrs. Dvorak explained that it was part of her job as a teacher to make sure that class was taught properly.  \\ 
        \midrule
        \rowcolor{blue!10} JS + $\tau$   & Toby, Mr. Krebs, Mrs. Dvorak \\
        \midrule
        SFKL    & Toby, Mrs. Dvorak \\
        \midrule
        \rowcolor{blue!10} SFKL + $\tau$   & Toby, Mr. Krebs, Mrs. Dvorak \\
        \midrule
        SRKL    & Toby, Mrs. Dvorak, and Mr. Krebs all agreed that Mr. Krebs should have given out detentions for discussing classes during class, but Mrs., Ms., and Mr. Krebs disagreed.  Regarding Mr. Krebs' position on detentions, Mrs. Dvorac stated that it was his job to teach, and therefore he should teach what is required of him.  Regarding Mr. Bacon's position on speech, Mrs. Dvorack said that he was a teacher, and therefore he should impart wisdom and knowledge to his students.  Regarding Mr. Schmidt's position on language, Mrs. Dvoraken stated that he was a professor, and therefore he should use his knowledge to teach what is required of himself. \\
        \midrule
        \rowcolor{blue!10} SRKL + $\tau$   & Toby, Mr. Krebs, Mrs. Dvorak \\
        \midrule
        AKL    & Toby, Mrs. Dvorak, and Mr. Krebs are all proper nouns. \\
        \midrule
        \rowcolor{blue!10} AKL + $\tau$   & Toby, Mr. Krebs, Mrs. Dvorak \\
        \midrule
        AB    & 1. Toby thought that Mr. Krebs was being unfair when He gave out detentions For talking during class, but Mr. Dvorak said it was part of his Job as a teacher. <n>2. Mrs. Dvorak added that it was his job to teach, and he should know what was expected of him. <n>3. Toby agreed, and added that he would remember this when he got home. \\
        \midrule
        DRKL    & Toby, Mr. Krebs, Mrs. Dvorak \\
        
    \bottomrule[0.1em]
\end{tabular}
}\label{table:case_study_unnatural_appendix_1}
\end{table*}

\section{More Results}
\label{app:more_exp}

\boldhdr{Additional benchmarks support the main finding}
Table~\ref{tab:extra} reports results on Self-Instruct and Vicuna. 
Temperature improves every objective across both GPT-2 Base and OPT-1.3B, confirming that its benefit generalizes beyond the main benchmarks. 
For GPT-2 Base, AKL and FKL obtain the largest gains, improving by $+2.94$ and $+2.33$ average ROUGE-L, respectively, and both surpass DRKL. 
For OPT-1.3B, all objectives improve by $+1.29$ to $+2.06$, with SRKL+$\tau$ achieving the best average score of $16.73$. 
These results further show that KD temperature is a broadly useful factor for KL-based distillation and can alter the ranking among objectives.


\begin{table*}[ht]
\centering
\caption{Qualitative examples from the Unnatural Instructions~\citep{honovich2023unnatural} benchmark. 
The OPT-1.3B student is distilled from the OPT-6.7B teacher. 
With KD temperature, all distillation objectives generate responses that better satisfy the given instructions compared with their $\tau=1$ counterparts.}
\vspace{5pt}
\resizebox{\textwidth}{!}{
\addtolength{\tabcolsep}{2.5pt}
\begin{tabular}{m{3cm}m{20cm}}
\toprule[0.1em]
        \multicolumn{2}{c}{\textbf{\textit{Case study \# 2}}} \\ \midrule[0.1em]
        Instruction & You will be given a piece of text that is missing important information, and a list of words. Your job is to fill in the blank with one of the words from the list so that the sentence/paragraph makes grammatical sense. \\ \midrule
        Input & I asked if he wanted to go for coffee, but he said \_\_\_\_\_.\texttt{<n>}Word options: ['no', 'not', 'didnt'] \\ \midrule
        \rowcolor{green!10} Ground-truth & I asked if he wanted to go for coffee, but he said no. \\
        \midrule
        \midrule
        FKL    & "I asked if he wanted to going for coffee, but he says \_\_\_\_."\texttt{<n><n>}Thus, "I asked if he wanted go for coffee, but [name] said \_\_\_\_." This constitutes fixing the grammatical error in the sentence "I asked if he wanting to go for coffee, \_\_\_\_. \\
        \midrule
        \rowcolor{blue!10} FKL + $\tau$   & I asked if he wanted to went for coffee, but he says he doesn't want to. \\
        \midrule
        RKL   & I asked if he wanted to going for coffee, but he says \_\_\_\_\_. \\
        \midrule
        \rowcolor{blue!10} RKL + $\tau$   & I asked if he wanted to take a coffee, but he said 'not' \\
        \midrule
        Sym-KL    & word options: 'no', 'not','didnt' \\
        \midrule
        \rowcolor{blue!10} Sym-KL + $\tau$   & I asked if he wanted to Go for coffee, but he says he doesn't want to. \\
        \midrule
        JS  & He would rather go for a walk.  \\ \midrule
        \midrule
        \rowcolor{blue!10} JS + $\tau$   & He wanted to go for coffee but he said he didn't want to. \\
        SFKL    & The sentence is grammatically incorrect because it should read as:\texttt{<n>}I asked if he wanted to take a coffee break, but he said \_\_\_\_.\texttt{<n>}The correct sentence is:\texttt{<n>}I asked if He wanted to take a coffee walk, but he said \_\_\_\_\_\_\_.\texttt{<n>}The grammatical problem with the statement is that it is missing the crucial phrase - hence it is incorrect grammatically. \\
        \midrule
        \rowcolor{blue!10} SFKL + $\tau$   & He asked if he wanted to Go for coffee, but he told me that he didn't want to go for coffee. \\
        \midrule
        SRKL    & Because you provided no information in the sentence, and only provided a list of words, the context is deeply misunderstood, and your answer is incorrect. You should have provided more information in the text to clarify the context. \\
        \midrule
        \rowcolor{blue!10} SRKL + $\tau$   & He wants to go for coffee, but he says he doesn't want to go. \\
        \midrule
        AKL    & Iamar did not want to go for coffee; he said [."not", "didnt", "not"]. \\
        \midrule
        \rowcolor{blue!10} AKL + $\tau$   & He wanted to go for coffee but he said he didn't want to. \\
        \midrule
        AB    & These statements are correct, but I would also recommend using the word ``to'' consistently in this case. 
        In the first statement, ``The glass fell three stories'' is a complete statement, while ``It broke into pieces'' is also a complete statement. 
        In the second statement, ``A piece hit John on the head'' is a correct statement, whereas ``A piece hit John onthe head'' and ``A piece hit Johnon thehead'' contain spacing errors and should be corrected. \\
        \midrule
        DRKL  & The correct answer is 'no'. Unless he specifically asks for a coffee, he doesn't want to go for coffee. \\
        \bottomrule
        
    \bottomrule[0.1em]
\end{tabular}
}\label{table:case_study_unnatural_appendix_2}
\end{table*}

\section{Case Studies}
\label{app:case_studies}

Tables~\ref{table:case_study_unnatural_appendix_1} and~\ref{table:case_study_unnatural_appendix_2} show qualitative examples from the Unnatural Instructions benchmark~\citep{honovich2023unnatural}. 
The OPT-1.3B student is distilled from the OPT-6.7B teacher under different distillation objectives, with and without KD temperature. 
Across both cases, applying temperature consistently improves instruction following and output format compared with the corresponding $\tau=1$ objectives.

In Case study \#1, the task requires extracting proper nouns in their original order. 
Without temperature, several objectives omit entities, add incorrect nouns, or generate unrelated continuations: RKL includes ``Class,'' SFKL omits ``Mr. Krebs,'' and JS/SRKL produce long hallucinated passages. 
With temperature, all KL-based objectives produce the exact ground-truth output: ``Toby, Mr. Krebs, Mrs. Dvorak,'' suggesting better preservation of fine-grained constraints such as output format and entity selection.

Case study \#2 is more challenging, but shows a similar trend. 
At $\tau=1$, several objectives misunderstand the blank-filling instruction: FKL and RKL copy or corrupt the incomplete sentence, Sym-KL repeats the word options, and SRKL critiques the prompt instead of answering. 
With temperature, the outputs become more aligned with the intended completion, often expressing that the person did not want coffee. 
Although some responses do not exactly match the ground-truth word ``no,'' they are substantially closer than their $\tau=1$ counterparts. 
Overall, these examples show that KD temperature improves not only ROUGE-L scores, but also the student's ability to follow instructions and produce task-appropriate responses.

\section{Use of AI Assistants}
\label{appx:llms}

In preparing this paper, we made limited use of AI Assistants such as ChatGPT only to help refine wording, correct grammar, and enhance clarity. All core research contributions, including conceptual development, theoretical analysis, experimental evaluations, and paper organization and composition, were conducted entirely by the authors.

\end{document}